\renewcommand{\refname}{References}
\renewcommand\bibsection{%
  \section*{{\refname}\@mkboth{\refname}{\refname}}%
}%
\newcommand{\set}[1]{\left\{#1\right\}}
\newcommand{\pr}[1]{\left(#1\right)}
\newcommand{\fpr}[1]{\mathopen{}\left(#1\right)}
\newcommand{\spr}[1]{\left[#1\right]}
\newcommand{\brak}[1]{\left<#1\right>}
\newcommand{\abs}[1]{{\left|#1\right|}}
\newcommand{\define}{\leftarrow}
\DeclareRobustCommand{\dispfunc}[2]{%
	\ensuremath{%
		\ifthenelse{\equal{#2}{}}%
			{\mathit{#1}}%
			{\mathit{#1}\fpr{#2}}}}
\newcommand{\lhood}[1]{\dispfunc{\ell}{#1}}
\newcommand{\score}[1]{\dispfunc{q}{#1}}
\newcommand{\gain}[1]{\dispfunc{\Delta}{#1}}
\newcommand{\bigO}[1]{\dispfunc{\mathcal{O}}{#1}}
\newcommand{\freq}[1]{\dispfunc{av}{#1}}
\newcommand{\bernoulli}[1]{\dispfunc{Bern}{#1}}
\newcommand{\dtname}[1]{\textsl{#1}}
\newcommand{\findblocks}{\textsc{FindCands}\xspace}
\newcommand{\findseg}{\textsc{FindSegment}\xspace}
\newcommand{\findchange}{\textsc{FindChange}\xspace}
\newcommand{\prbchange}{\textsc{Change}\xspace}
\newcommand{\prbchangeinc}{\textsc{ChangeInc}\xspace}
\newcommand{\prbchangeblock}{\textsc{ChangeBlock}\xspace}
\definecolor{yafaxiscolor}{rgb}{0.3, 0.3, 0.3}
\definecolor{yafcolor1}{rgb}{0.4, 0.165, 0.553}
\definecolor{yafcolor2}{rgb}{0.949, 0.482, 0.216}
\definecolor{yafcolor3}{rgb}{0.47, 0.549, 0.306}
\definecolor{yafcolor4}{rgb}{0.925, 0.165, 0.224}
\definecolor{yafcolor5}{rgb}{0.141, 0.345, 0.643}
\definecolor{yafcolor6}{rgb}{0.965, 0.933, 0.267}
\definecolor{yafcolor7}{rgb}{0.627, 0.118, 0.165}
\definecolor{yafcolor8}{rgb}{0.878, 0.475, 0.686}
\newlength{\yafaxispad}
\newlength{\yaftlpad}
\newlength{\yaflabelpad}
\newlength{\yafaxiswidth}
\newlength{\yafticklen}
\def\pgfplots@drawtickgridlines@INSTALLCLIP@onorientedsurf#1{}
\newcommand{\yafdrawaxis}[4]{
	\pgfplotstransformcoordinatex{#1}\let\xmincoord=\pgfmathresult 
	\pgfplotstransformcoordinatex{#2}\let\xmaxcoord=\pgfmathresult 
	\pgfplotstransformcoordinatey{#3}\let\ymincoord=\pgfmathresult 
	\pgfplotstransformcoordinatey{#4}\let\ymaxcoord=\pgfmathresult 
	\pgfsetlinewidth{\yafaxiswidth} 
	\pgfsetcolor{yafaxiscolor}
	\pgfpathmoveto{\pgfpointadd{\pgfpointadd{\pgfplotspointrelaxisxy{0}{0}}{\pgfqpointxy{\xmincoord}{0}}}{\pgfqpoint{-0.5\yafaxiswidth}{\yafaxispad}}}
	\pgfpathlineto{\pgfpointadd{\pgfpointadd{\pgfplotspointrelaxisxy{0}{0}}{\pgfqpointxy{\xmaxcoord}{0}}}{\pgfqpoint{0.5\yafaxiswidth}{\yafaxispad}}}
	\pgfpathmoveto{\pgfpointadd{\pgfpointadd{\pgfplotspointrelaxisxy{0}{0}}{\pgfqpointxy{0}{\ymincoord}}}{\pgfqpoint{\yafaxispad}{-0.5\yafaxiswidth}}}
	\pgfpathlineto{\pgfpointadd{\pgfpointadd{\pgfplotspointrelaxisxy{0}{0}}{\pgfqpointxy{0}{\ymaxcoord}}}{\pgfqpoint{\yafaxispad}{0.5\yafaxiswidth}}}
	\pgfusepath{stroke}
}
\pgfplotsset{axis y line=left, axis x line=bottom,
	tick align=outside,
	tickwidth=\yafticklen,
	clip = false,
    x axis line style= {-, line width = 0pt, color=black!0},
    y axis line style= {-, line width = 0pt, color=black!0},
    x tick style= {line width = \yafaxiswidth, color=yafaxiscolor, yshift = \yafaxispad},
    y tick style= {line width = \yafaxiswidth, color=yafaxiscolor, xshift = \yafaxispad},
    x tick label style = {font=\scriptsize, yshift = \yaftlpad, inner xsep = 0pt},
    y tick label style = {font=\scriptsize, xshift = \yaftlpad},
    every axis y label/.style = {at = {(ticklabel cs:0.5)}, rotate=90, anchor=center, font=\scriptsize, yshift = -\yaflabelpad, inner sep = 0pt},
    every axis x label/.style = {at = {(ticklabel cs:0.5)}, anchor=center, font=\scriptsize, yshift = \yaflabelpad},
    x tick label style = {font=\scriptsize, yshift = 1pt},
    grid = major,
    major grid style  = {dash pattern = on 1pt off 3 pt},
	every axis plot post/.append style= {line width=\yafaxiswidth} ,
	legend cell align = left,
	legend style = {inner sep = 1pt, cells = {font=\scriptsize}},
	legend image code/.code={%
		\draw[mark repeat=2,mark phase=2,#1] 
		plot coordinates { (0cm,0cm) (0.15cm,0cm) (0.3cm,0cm) };%
	} 
}
\begin{document}

\title{Fast likelihood-based change point detection}
%
%

\author{Nikolaj Tatti\orcidID{0000-0002-2087-5360}}
\authorrunning{N. Tatti}
\tocauthor{Nikolaj Tatti}
\toctitle{Fast likelihood-based change point detection}

\institute{HIIT, University of Helsinki, Helsinki, Finland\\
\email{nikolaj.tatti@helsinki.fi}}

\maketitle              

\begin{abstract}
Change point detection plays a fundamental role in many real-world applications,
where the goal is to analyze and monitor the behaviour of a data stream.
In this paper, we study change detection in binary streams.  To this end, we
use a likelihood ratio between two models as a measure for indicating
change. The first model is a single bernoulli variable while the second model
divides the stored data in two segments, and models each segment with its
own bernoulli variable. Finding the optimal split can be done in $\bigO{n}$ time,
where $n$ is the number of entries since the last change point.
This is too expensive for large $n$. To combat this we propose an approximation
scheme that yields $(1 - \epsilon)$ approximation in $\bigO{\epsilon^{-1} \log^2 n}$ time.
The speed-up consists of several steps: First we reduce the number of possible
candidates by adopting a known result from segmentation problems. We then show
that for fixed bernoulli parameters we can find the optimal change point in
logarithmic time.  Finally, we show how to construct a candidate list
of size $\bigO{\epsilon^{-1} \log n}$ for model parameters.
We demonstrate empirically the approximation quality and the running time of
our algorithm, showing that we can gain a significant speed-up with a minimal
average loss in optimality.

\end{abstract}

\section{Introduction}
Many real-world applications involve in monitoring and analyzing a constant
stream of data. A fundamental task in such applications is to monitor whether
a change has occurred.  For example, the goal may be monitoring the performance
of a classifier over time, and triggering retraining if the quality degrades
too much. We can also use change point detection techniques to detect anomalous
behavior in the data stream. As the data flow may be significant, it is important
to develop efficient algorithms.

In this paper we study detecting change in a stream of binary numbers, that is,
we are interested in detecting whether the underlying distribution has recently changed
significantly.  To test the change we will use a standard likelihood ratio
statistic. Namely, assume that we have already observed $n$ samples from the last time
we have observed change. In our
first model, we fit a single bernoulli variable to these samples. In our second
model, we split these samples in two halves, say at point $i$, and fit
two bernoulli variables to these halves. Once this is done we compare the
likelihood ratio of the models. If the ratio is large enough, then we deem that
change has occurred. 

In our setting, index $i$ is not fixed. Instead we are looking for the index
that yields the largest likelihood. This can be done naively in $\bigO{n}$ time
by testing each candidate. This may be too slow, especially if $n$ is large
enough and we do not have the resources before a new sample arrives.
Our main technical contribution is to show how
we can achieve $(1 - \epsilon)$ approximate of the optimal $i$ in $\bigO{\epsilon^{-1} \log^2 n}$ time.

To achieve this we will first reduce the number of candidates for the optimal
index $i$. We say that index $j$ is a \emph{border} if each interval ending at $j -
1$ has a smaller proportion of 1s that any interval that starts at $j$.  A known result
states that the optimal change point will be among border indices.  Using
border indices already reduces the search time greatly in practice, with
theoretical running time being $\bigO{n^{2/3}}$.

To obtain even smaller bounds we show that we can find the optimal index among the
border indices for \emph{fixed} model parameters, that is, the parameters for
the two bernoulli variables, in $\bigO{\log n}$ time. We then construct a list
of $\bigO{\epsilon^{-1} \log n}$ candidates for these parameters.  Moreover, this
list will contain model parameters that are close enough to the optimal 
parameters, so testing them yields $(1 - \epsilon)$ approximation
guarantee in $\bigO{\epsilon^{-1} \log^2 n}$ time.

The remaining paper is organized as follows. In Section~\ref{sec:prel} we
introduce preliminary notation and define the problem. In
Section~\ref{sec:block} we introduce border points. We present our main
technical contribution in Sections~\ref{sec:segment}--\ref{sec:frequency}:
first we show how to find optimal index for fixed model parameters, and then
show how to select candidates for these parameters. We present related work
in Section~\ref{sec:related} and empirical evaluation in Section~\ref{sec:exp}.
Finally, we conclude with discussion in Section~\ref{sec:conclusions}.

\section{Preliminaries and problem definition}\label{sec:prel}

Assume a sequence of $n$ binary numbers $S = s_1, \ldots, s_n$.
Here $s_1$ is either the beginning of the stream or the last time we detected a change.
Our goal is to determine whether a change
has happened in $S$. More specifically, we consider two statistical models:
The first model $M_1$ assumes that $S$ is generated with a single bernoulli variable.
The second model $M_2$ assumes that there is an index $i$, a change point, such that
$s_1, \ldots, s_{i - 1}$ is generated by one bernoulli variable
and
$s_i, \ldots, s_n$ is generated by another bernoulli variable.

Given a sequence $S$ we will fit $M_1$ and $M_2$ and compare
the log-likelihoods. Note that the model $M_2$ depends on the change point $i$, so we
need to select $i$ that maximizes the likelihood of $M_2$.  If the ratio is large
enough, then we can determine that change has occurred.

To make the above discussion more formal, let us introduce some notation.
Given two integers $a$ and $b$, and real number between 0 and 1, we denote the
log-likelihood of a bernoulli variable by
\[
	\lhood{a, b; p} = a \log p + b \log ( 1 - p)\quad.
\]
For a fixed $a$ and $b$, the log-likelihood is at its maximum if $p = a / (a + b)$.
In such a case, we will often drop $p$ from the notation and simply write $\lhood{a, b}$.

We have the following optimization problem.

\begin{problem}[\prbchange]
Given a sequence $S = s_1, \ldots, s_n$, find an index $i$ s.t.
\[
	\lhood{a_1, b_1} + \lhood{a_2, b_2} - \lhood{a, b}
\]
is maximized, where
\[
\begin{split}
	a_1 & = \sum_{j = 1}^{i - 1} s_i, \ 
	b_1  = i - 1 - a_1, \quad
	a_2  = \sum_{j = i}^{k} s_i, \ 
	b_2  = k - i - a_2, \\
	a & = a_1 + a_2, \text{ and} \ 
	b  = b_1 + b_2\quad.
\end{split}
\]
\end{problem}

Note that \prbchange can be solved in $\bigO{n}$ time by simply iterating over all possible values for $i$.
Such running time may be too slow, especially in a streaming setting when new points arrive constantly,
and our goal is to determine whether change has occurred in real time.
The main contribution of this paper is to show how to compute $(1 - \epsilon)$ estimate of \prbchange
in $\bigO{\epsilon^{-1} \log^2 n}$ time. This algorithm requires additional data structures
that we will review in the next section. 
As our main application is to search change points in a stream, these structures
need to be maintained over a stream. Luckily, there is an
amortized constant-time algorithm for maintaining the needed structure, as demonstrated in the next section.

Once we have solved \prbchange, we compare the obtained score against the
threshold $\sigma$. Note that $M_2$ will always have a larger likelihood than $M_1$. In
this paper, we will use BIC to adjust for the additional model complexity of
$M_2$. The model $M_2$ has three parameters while the model $M_1$ has 1 parameter.
This leads to a BIC penalty of $(3 - 1)/2 \log n = \log n$. In practice, we need to
be more conservative when selecting $M_2$ due to the multiple hypothesis testing problem.
Hence, we will use $\sigma = \tau + \log n$ as the threshold. Here, $\tau$ is a user parameter; we will provide
some guidelines in selecting $\tau$ during the experimental evaluation in Section~\ref{sec:exp}.

When change occurs at point $i$ we have two options: we can either discard
the current window and start from scratch, or we can drop only the first $i$ elements.
In this paper we will use the former approach since the latter approach requires
additional maintenance which may impact overall computational complexity.

\section{Reducing number of candidates}\label{sec:block}

Our first step for a faster change point discovery is to reduce the number of
possible change points. To this end, we define a variant of \prbchange, where we
require that the second parameter in $M_2$ is larger than the first.

\begin{problem}[\prbchangeinc]
Given a sequence $S = s_1, \ldots, s_n$, find an index $i$ s.t.
\[
    \lhood{a_1, b_1} + \lhood{a_2, b_2} - \lhood{a, b}
\]
is maximized, where
\[
\begin{split}
    a_1 & = \sum_{j = 1}^{i - 1} s_i, \
    b_1  = i - 1 - a_1, \quad
    a_2  = \sum_{j = i}^{k} s_i, \
    b_2  = k - i - a_2, \\
    a & = a_1 + a_2, \text{ and} \
    b  = b_1 + b_2
\end{split}
\]
with $a_1 / (a_1 + b_1) \leq a_2 / (a_2 + b_2)$.
\end{problem}

From now on, we will focus on solving \prbchangeinc. This problem is meaningful
by itself, for example, if the goal is to detect a deterioration in a classifier,
that is, sudden increase in entries being equal to 1. However, we can also use
\prbchangeinc to solve \prbchange. This is done by defining a flipped sequence
$S' = s'_1, \ldots, s'_n$, where $s'_i = 1 - s_i$. Then the solution for
\prbchange is either the solution of $\prbchangeinc(S)$ or the solution  of
$\prbchangeinc(S')$.

Next we show that we can limit ourselves to \emph{border} indices when solving \prbchangeinc.

\begin{definition}
Assume a sequence of binary numbers $S = (s_i)_{i = 1}^n$. We say that index $j$ is a \emph{border} index
if there are no indices $x, y$ with $x < j < y$ such that
\[
	\frac{1}{j - x}\sum_{i = x}^{j - 1} s_i \geq \frac{1}{y - j}\sum_{i = j}^{y - 1} s_i \quad. 
\]
\end{definition}
In other words, $j$ is a border index if and only if the average of any interval ending at $j - 1$ is smaller
than the average of any interval starting at $j$.

\begin{proposition} 
There is a border index $i$ that solves \prbchangeinc.
\label{prop:block}
\end{proposition}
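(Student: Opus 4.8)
The plan is to start from an optimal solution $i^*$ of \prbchangeinc and show that if $i^*$ is not a border index, then it can be moved to a nearby border index without decreasing the objective. Concretely, suppose $i^*$ is not a border index. Then by definition there exist $x < i^* < y$ with
\[
	\frac{1}{i^* - x}\sum_{j = x}^{i^* - 1} s_j \geq \frac{1}{y - i^*}\sum_{j = i^*}^{y - 1} s_j \quad.
\]
The idea is that this violating pair exhibits a ``wrong-direction'' jump right around $i^*$: the block just to the left of $i^*$ is at least as dense in ones as the block just to the right. Intuitively, the optimal split should sit at a place where the density genuinely increases, so I would argue that we can shift $i^*$ either left (to $x$) or right (to $y$) and do no worse.

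The key computational step is a convexity / averaging inequality for the function $\gain{i} = \lhood{a_1, b_1} + \lhood{a_2, b_2} - \lhood{a, b}$ as $i$ ranges over an interval. Since the last term $\lhood{a, b}$ does not depend on $i$, it suffices to study $f(i) = \lhood{a_1(i), b_1(i)} + \lhood{a_2(i), b_2(i)}$. I would show that on any maximal run of indices where the ``local densities'' are ordered the wrong way, $f$ has no interior maximum — more precisely, that moving the split across a block whose left-average dominates its right-average is never harmful. The cleanest way to see this is: write $f(y) - f(i^*)$ as a telescoping sum and compare it to $f(i^*) - f(x)$; using the concavity of the entropy function $h(p) = -p\log p - (1-p)\log(1-p)$ and the fact that $\lhood{a,b} = -(a+b)\,h\!\left(\tfrac{a}{a+b}\right)$, one gets that the ``merge'' of two adjacent segments with averages $p \geq q$ into the configuration with the split between them is dominated by either fully merging left or fully merging right. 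Hence $\max(f(x), f(y)) \geq f(i^*)$, so at least one of $x, y$ is also optimal and is strictly closer to being a border index (it has a strictly shorter violating witness, or none).

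Iterating this argument — each step either reaches a border index or strictly shrinks the relevant window, and there are only finitely many indices — produces a border index that is still optimal, which proves Proposition~\ref{prop:block}. The main obstacle I anticipate is making the ``shift is harmless'' inequality fully rigorous with the correct direction: one must be careful that the constraint $a_1/(a_1+b_1) \le a_2/(a_2+b_2)$ of \prbchangeinc is preserved (or becomes irrelevant) after the shift, and that the entropy-concavity bound is applied to the right grouping of terms. I would handle this by first proving the two-segment lemma — if $p \ge q$ are the averages of two adjacent blocks, then splitting them into a single block, or keeping them as two blocks but reassigning one to the other side, never decreases total log-likelihood — and then bootstrapping to the general case by induction on the length of the violating interval $[x, y)$. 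This known-result flavor suggests the argument is standard in the segmentation literature, so citing and adapting the corresponding monotonicity lemma is the likely route the authors take.
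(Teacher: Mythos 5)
Your overall strategy --- take an optimal $i$, assume it is not a border index, exhibit witnesses $x < i < y$, and argue that the split can be moved to $x$ or $y$ without loss --- is exactly the paper's strategy. But the central inequality, $\max(f(x),f(y)) \ge f(i)$, is asserted rather than proved, and the route you sketch for it has a real gap: your $f(k) = \lhood{a_1(k),b_1(k)} + \lhood{a_2(k),b_2(k)}$ uses parameters that are \emph{re-optimized at every $k$}, so it does not telescope cleanly, and blockwise concavity of the entropy does not directly compare the three split positions $x$, $i$, $y$ (your ``two-segment lemma'' partly concerns merging two blocks into one, which is a different comparison from moving the split point within a fixed two-segment model). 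The device you are missing is the one the paper uses: \emph{freeze} the parameters at the values $p_1,p_2$ that are optimal for $i$. The frozen-parameter score $\gain{k; p_1, p_2}$ is affine in $k$ and in the prefix sum $\sum_{j < k} s_j$, so the normalized increments $\frac{1}{i-x}\pr{\gain{i; p_1, p_2}-\gain{x; p_1, p_2}}$ and $\frac{1}{y-i}\pr{\gain{y; p_1, p_2}-\gain{i; p_1, p_2}}$ equal $(z_1-z_2)+(\alpha_1-\alpha_2)d_1$ and $(z_1-z_2)+(\alpha_1-\alpha_2)d_2$ respectively, where $d_1 \ge d_2$ are the two witness averages and $\alpha_1-\alpha_2 \le 0$. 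Combining this with $\gain{k} \ge \gain{k; p_1, p_2}$ (equality at $k=i$) and the optimality bound $\gain{i} \ge \gain{y}$ gives $\gain{x} \ge \gain{i}$ in one step. This is the tangent-line form of the concavity you were reaching for, but it is the actual content of the proof and cannot be left as a claim.

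A second, smaller gap is termination. You iterate ``move to $x$ or $y$'' and claim each step strictly shrinks the violating window, but the new index may have violating witnesses outside $[x,y)$, and since the objective can stay constant along the moves you cannot exclude cycling by monotonicity alone. The paper sidesteps the iteration entirely by fixing $i$ to be the \emph{smallest} optimal index at the outset; the conclusion $\gain{x} \ge \gain{i}$ with $x < i$ is then an immediate contradiction. (You are right to flag the boundary issues: the paper does handle $p_1 = 0$ and $p_2 = 1$ as separate degenerate cases before running the main argument.)
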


The proposition follows from a variant of Theorem~1~in~\citep{tatti2013fast}.
For the sake of completeness we provide a direct proof in Appendix in
supplementary material.

We address the issue of maintaining border indices at the end of this section.

The proposition permits us to ignore all indices that are not borders.
That is, we can group the sequence entries in blocks, each block starting with a
border index. We can then search for $i$ using these blocks instead
of using the original sequence. 

It is easy to see that these blocks have the following property: the proportion
of 1s in the next block is always larger.  This key feature will play a crucial
role in the next two sections as it allows us to use binary search techniques
and reduce the computational complexity.
Let us restate the original problem so that we can use this feature.
First, let us define what is a block sequence.

\begin{definition}
Let $B = \brak{(u_i, v_i)}_{i = 1}^k$ be a sequence of $k$ pairs of non-negative integers with $u_i + v_i > 0$.
We say that $B$ is \emph{block sequence} if $\frac{u_{i + 1}}{u_{i + 1} + v_{i + 1}} > \frac{u_{i}}{u_{i} + v_{i}}$.
\end{definition}

We obtain a block sequence $B$ from a binary sequence $S$ by grouping the entries between border points: the counter $u_i$
indicates the number of 1s while the counter $v_i$ indicates the number of 0s.

Our goal is to use block sequences to solve \prbchangeinc. First, we need some additional notation.

\begin{definition}
Given a block sequence $B$, we define $B[i;j] = (a, b)$,
where $a = \sum_{k = i}^j u_k$ and $b = \sum_{k = i}^j v_k$.
If $i > j$, then $a = b = 0$. Moreover, we will write
\[
	\freq{i, j; B} = \frac{a}{a + b}\quad.
\]
If $B$ is known from the context, we will write
$\freq{i, j}$.
\end{definition}

\begin{definition}
Given a block sequence $B$, we define the score of a change point $i$ to be
\begin{equation}
\label{eq:blockscore}
	\score{i; B} = \lhood{a_1, b_1} + \lhood{a_2, b_2} - \lhood{a, b},
\end{equation}
where $(a_1, b_1) = B[1;i - 1]$, $(a_2, b_2) = B[i;k]$, and $a = a_1 + a_2$ and $b = b_1 + b_2$.
\end{definition}
Note that $\lhood{a, b}$ is a constant but it is useful to keep since $\score{i; B}$  is
a log-likelihood ratio between two models, and this formulation allows us to estimate the objective
in Section~\ref{sec:frequency}. 

\begin{problem}[\prbchangeblock]
Given a block sequence $B$ find a change point $i$ that maximizes $\score{i; B}$.
\end{problem}

We can solve \prbchangeinc by maintaining a block sequence induced by the border points,
and solving \prbchangeblock. Naively, we can simply compute $\score{i; B}$
for each index in $\bigO{\abs{B}}$ time. If the distribution is static, then $\abs{B}$
will be small in practice. However, if there is a concept drift, that is, there are more
1s in the sequence towards the end of sequence, then $\abs{B}$ may increase significantly.
\citet{calders2008pava} argued that when dealing with binary sequences of length $n$, the number of blocks $\abs{B} \in \bigO{n^{2/3}}$.
In the following two sections we will show how to solve \prbchangeblock faster.

However, we also need to maintain the block sequence as new entries arrive.
Luckily, there is an efficient update algorithm, see~\cite{calders2008pava} for example.
Assume that we have already observed $n$ entries, and we have a block sequence
of $k$ blocks $B$ induced by the border points. 
Assume a new entry $s_{n + 1}$. We add $(k + 1)$th block $(u_{k + 1}, v_{k + 1})$ to $B$,
where $u_{k + 1} = [s_{n + 1} = 1]$ and $v_{k + 1} = [s_{n + 1} = 0]$.
We then check whether $\freq{k + 1, k + 1} \leq \freq{k, k}$, that is, whether the average
of the last block is smaller than or equal to the average of the second last block.
If it is, then we merge the blocks and repeat the test. This algorithm maintains the border
points correctly and runs in amortized $\bigO{1}$ time.

It is worth mentioning that the border indices are also connected to isotonic
regression (see~\cite{leeuw2009isotonic}, for example). Namely, if one would
fit isotonic regression to the sequence $S$, then the border points are the
points where the fitted curve changes its value.  In fact, the update algorithm
corresponds to the  pool adjacent violators (PAVA) algorithm, a method used to
solve isotonic regression~\cite{leeuw2009isotonic}.

\section{Finding optimal change point for fixed parameters}\label{sec:segment}

In this section we show that if the model parameters are known and fixed, then we can find
the optimal change point in logarithmic time.

First, let us extend the definition of $\score{\cdot}$ to handle fixed parameters.
\begin{definition}
Given a block sequence $B$, an index $i$, and two parameters $p_1$ and $p_2$, we define
\[
    \score{i; p_1, p_2, B} = \lhood{a_1, b_1; p_1} + \lhood{a_2, b_2; p_2} - \lhood{a, b},
\]
where $(a_1, b_1) = B[1;i - 1]$, $(a_2, b_2) = B[i;k]$, and $a = a_1 + a_2$ and $b = b_1 + b_2$.
\end{definition}

We can now define the optimization problem for fixed parameters.

\begin{problem}
\label{prb:maxscorepar}
Given a block sequence $B$, two parameters $0 \leq p_1 < p_2 \leq 1$, find $i$ maximizing $\score{i; p_1, p_2, B}$.
\end{problem}

Let $i^*$ be the solution for Problem~\ref{prb:maxscorepar}.
It turns out that we can construct a sequence of numbers, referred as $d_j$ below, such that
$d_j > 0$ if and only if $j < i^*$. This allows us to use binary search to find $i^*$.

\begin{proposition}
\label{prop:maxscorepar}
Assume a block sequence $B = \brak{(u_j, v_j)}$ and two parameters $0 \leq p_1 < p_2 \leq 1$. Define
\[
	d_j = \lhood{u_j, v_j, p_1}  - \lhood{u_j, v_j, p_2}\quad.
\]
Then there is an index $i$ such that $d_j > 0$ if and only if $j < i$. Moreover, index $i$ solves Problem~\ref{prb:maxscorepar}.
\end{proposition}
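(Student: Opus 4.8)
The plan is to analyze how the score $\score{i; p_1, p_2, B}$ changes as we move the change point by one block, i.e., to study the discrete difference $\score{i+1; p_1, p_2, B} - \score{i; p_1, p_2, B}$, and show it equals (up to sign conventions) $-d_i$ or something monotone in $d_i$. Concretely, when we move the change point from $i$ to $i+1$, the block $(u_i, v_i)$ leaves the second segment and joins the first segment; the constant term $\lhood{a,b}$ is unaffected. Hence the change in score is exactly $\lhood{u_i, v_i; p_1} - \lhood{u_i, v_i; p_2} = d_i$. So $\score{i+1; p_1,p_2,B} - \score{i; p_1,p_2,B} = d_i$.

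Next I would establish that the sequence $(d_j)$ is nonincreasing in $j$. Writing $d_j = u_j \log\frac{p_1}{p_2} + v_j \log\frac{1-p_1}{1-p_2}$, this is a linear function of the pair $(u_j, v_j)$; normalizing, $d_j = (u_j+v_j)\bigl(\freq{j,j}\log\frac{p_1}{p_2} + (1-\freq{j,j})\log\frac{1-p_1}{1-p_2}\bigr)$. Since $p_1 < p_2$ we have $\log\frac{p_1}{p_2} < 0 < \log\frac{1-p_1}{1-p_2}$, so the bracketed quantity is strictly decreasing in the block average $\freq{j,j}$; and by the block sequence property $\freq{j,j}$ is strictly increasing in $j$. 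The subtlety is the factor $(u_j+v_j)$, which need not be monotone, so I cannot immediately conclude $(d_j)$ is monotone — but I do not need full monotonicity. What I actually need is: once $d_j \le 0$, all subsequent $d_{j'} \le 0$. This holds because $d_j \le 0$ iff the bracketed per-element quantity is $\le 0$ iff $\freq{j,j}$ is at least the threshold $t := \log\frac{1-p_1}{1-p_2} / \bigl(\log\frac{1-p_1}{1-p_2} - \log\frac{p_1}{p_2}\bigr)$, and since the block averages strictly increase, once this threshold is crossed it stays crossed. So there is an index $i$ with $d_j > 0 \iff j < i$.

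Finally I would combine these two facts. Since $\score{i+1} - \score{i} = d_i$, the score is strictly increasing while $d_i > 0$ (i.e. for $i < i^\star$ where $i^\star$ is this threshold index) and nonincreasing once $d_i \le 0$ (i.e. for $i \ge i^\star$). Therefore the maximum of $\score{\cdot; p_1, p_2, B}$ over valid change points is attained at $i = i^\star$, which is exactly the index $i$ characterized by the $d_j$ sign pattern. I should be a little careful about the boundary cases (all $d_j > 0$, or all $d_j \le 0$, and the range of valid indices $i$, e.g. $1 \le i \le k+1$), but these are routine. The main obstacle is precisely the non-monotonicity of $(d_j)$ itself coming from the varying block sizes; the resolution is to phrase everything in terms of the sign of $d_j$ (equivalently, comparison of the block average against a fixed threshold), which \emph{is} monotone because block averages are strictly increasing — that is the one place where the block sequence structure is essential.
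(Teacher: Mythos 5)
Your proposal is correct and follows essentially the same route as the paper's own proof: the identity $\score{j+1;p_1,p_2,B}-\score{j;p_1,p_2,B}=d_j$, plus the observation that the per-element quantity $d_j/(u_j+v_j)$ is a decreasing function of the strictly increasing block average and shares the sign of $d_j$, which yields the single sign change and hence optimality by telescoping. Your explicit remark that $(d_j)$ itself need not be monotone and that only its sign pattern matters is exactly the normalization step the paper performs with $d_j/t_j$.
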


\begin{proof}
Let us first show the existence of $i$. Let $t_j = u_j + v_j$, and
write $X = \log p_1 - \log p_2$ and $Y = \log (1 - p_1) - \log (1 - p_2)$.
Then
\[
	\frac{d_j}{t_j} = \frac{u_j}{t_j} X + \frac{v_j}{t_j} Y = \frac{u_j}{t_j} X + Y - \frac{u_j}{t_j} Y = \frac{u_j}{t_j} (X - Y) + Y\quad.
\]
Since $B$ is a block sequence, the fraction $u_j / t_j$ is increasing. Since $X < 0$ and $Y > 0$, we have $X - Y < 0$, so $d_j / t_j$ is decreasing.
Since $d_j$ and $d_j / t_j$ have the same sign, there is an index $i$ satisfying the condition of the statement.

To prove the optimality of $i$, first note that
\[
	d_j = \score{j + 1; p_1, p_2, B} -  \score{j; p_1, p_2, B}\quad.
\]
Let $i^*$ be a solution for Problem~\ref{prb:maxscorepar}. If $i < i^*$. Then
\[
	 \score{i^*; p_1, p_2, B} -  \score{i; p_1, p_2, B} = \sum_{j = i}^{i^* - 1} d_j \leq 0,
\]
proving the optimality of $i$. The case for $i > i^*$ is similar.\qed
\end{proof}

Proposition~\ref{prop:maxscorepar} implies that we can use binary search to
solve Problem~\ref{prb:maxscorepar} in $\bigO{\log \abs{B}} \in \bigO{\log n}$
time. We refer to this algorithm as $\findseg(p_1, p_2, B)$.

\section{Selecting model parameters}\label{sec:frequency}

We have shown that
if we know the optimal $p_1$ and $p_2$, then we can use binary search as described in the previous
section to find the change point. 
Our main idea is to test several candidates for $p_1$ and $p_2$ such that one of the candidates
will be close to the optimal parameters yielding an approximation guarantee.

Assume that we are given a block sequence $B$ and select a change point $i$.
Let $(a_1, b_1) = B[1; i - 1]$, $(a_2, b_2) = B[i; k]$, $a = a_1 + a_2$, $b = b_1 + b_2$
be the counts.
We can rewrite objective given in Eq.~\ref{eq:blockscore} as 
\begin{equation}
\label{eq:split}
\begin{split}
	\score{i; B} & = \lhood{a_1, b_1} + \lhood{a_2, b_2} - \lhood{a, b} \\
	             & = \pr{\lhood{a_1, b_1, p_1} - \lhood{a_1, b_1, q}} + \pr{\lhood{a_2, b_2, p_2} - \lhood{a_2, b_2, q}},
\end{split}
\end{equation}
where the model parameters are $p_1 = a_1 / (a_1 + b_1)$,
$p_2 = a_2 / (a_2 + b_2)$, and $q = a / (a + b)$.

The score as written in Eq.~\ref{eq:split} is split in two parts, the first
part depends on $p_1$ and the second part depends on $p_2$. We will first focus
solely on estimating the second part. First, let us show how much we can vary
$p_2$ while still maintaining a good log-likelihood ratio.

\begin{proposition}
\label{prop:bracket}
Assume $a, b > 0$, and let $p = a / (a + b)$.
Assume $0 < q \leq p$.
Assume also $\epsilon > 0$.  Define $h(x) = \lhood{a, b; x} - \lhood{a, b; q}$.
Assume $r$ such that
\begin{equation}
	\label{eq:bracket}
	\log q + (1 - \epsilon) (\log p - \log q) \leq \log r \leq \log p\quad. 
\end{equation}
Then $h(r) \geq (1 - \epsilon) h(p)$.
\end{proposition}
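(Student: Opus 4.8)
\emph{Proof idea.} The obstacle to a one-line argument is that $h$ is \emph{not} concave as a function of $x$, so one cannot simply interpolate linearly between $h(q)$ and $h(p)$. The idea I would use is that $h$ \emph{does} become concave after the substitution $u=\log x$. Set
\[
	g(u) = h(e^u) = a u + b\log(1-e^u) + C, \qquad C = -a\log q - b\log(1-q),
\]
which is well defined and finite on $[\log q,\log p]$: since $a,b>0$ we have $p=a/(a+b)\in(0,1)$, and $0<q\le p$, so this interval lies in $(-\infty,0)$. Condition~\eqref{eq:bracket} says precisely that $\log r$ lies in $[\log q,\log p]$, inside its uppermost $(1-\epsilon)$-fraction. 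So the plan is: (i) prove $g$ is concave; (ii) evaluate $g$ at the two endpoints of the interval; (iii) combine the two through the definition of concavity.

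For (i) I would differentiate twice: the term $au$ contributes nothing to $g''$, and using $\tfrac{d}{du}\log(1-e^u)=-e^u/(1-e^u)$ one gets $g''(u) = -b\,e^u/(1-e^u)^2$, which is negative for every $u<0$ because $b>0$ and $0<e^u<1$; hence $g$ is concave on $[\log q,\log p]$. For (ii), $g(\log q)=h(q)=0$ is immediate from the definition of $h$, while $g(\log p)=h(p)=\lhood{a,b;p}-\lhood{a,b;q}\ge 0$, since $p=a/(a+b)$ maximizes $\lhood{a,b;\cdot}$ (as recalled in Section~\ref{sec:prel}), so $h(p)\ge h(q)=0$.

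For (iii) write $\log r=(1-t)\log q+t\log p$ with $t=(\log r-\log q)/(\log p-\log q)$; condition~\eqref{eq:bracket} then forces $t\in[1-\epsilon,1]$ (if $p=q$ the statement is trivial, and $\epsilon\le 1$ in the intended range of the approximation parameter). Concavity of $g$ gives
\[
	h(r)=g(\log r)\ \ge\ (1-t)\,g(\log q)+t\,g(\log p)\ =\ t\,h(p)\ \ge\ (1-\epsilon)\,h(p),
\]
where the last step uses $t\ge 1-\epsilon$ together with $h(p)\ge 0$, which is exactly the claimed inequality.

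The only genuinely non-routine step is (i): recognising that the logarithmic reparametrisation turns $h$ into a concave function, which is what powers the interpolation bound; the second-derivative computation, the endpoint values, and the final convexity inequality are all mechanical. A minor point I would state carefully is the domain check that $q$ and $p$ lie strictly inside $(0,1)$, so that every logarithm appearing, and the concavity argument itself, is legitimate.
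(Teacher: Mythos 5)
Your proposal is correct and follows essentially the same route as the paper's proof: the logarithmic reparametrisation $u=\log x$ making $h(e^u)$ concave (the paper checks this via the first derivative $f'(u)=a-b\,e^u/(1-e^u)$ being decreasing, you via the second derivative, which is equivalent), followed by the endpoint values $h(q)=0$, $h(p)\ge 0$ and the concavity interpolation with coefficient $t\ge 1-\epsilon$. The extra care you take with the degenerate case $p=q$ and the domain checks is a minor, harmless addition.
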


\begin{proof} 
Define $f(u) = h(\exp u)$. We claim that $f$ is concave. To prove the claim,
note that the derivative of $f$ is equal to
\[
    f'(u) = a - b\frac{\exp u}{1 - \exp u}\quad.
\]
Hence, $f'$ is decreasing for $u < 0$, which proves the concavity of $f$.

Define $c = \frac{\log r - \log q}{ \log p - \log q}$. Eq.~\ref{eq:bracket}
implies that $1 - \epsilon \leq c$.
The concavity of $f(u)$ and the fact that $h(q) = 0$ imply that
\[
	h(r) = f(\log r) \geq f(\log q) + c\spr{f(\log p) - f(\log q)} = ch(p) \geq (1 - \epsilon)h(p),
\]
which proves the proposition.\qed
\end{proof}

We can use the proposition in the following manner.
Assume a block sequence $B$ with $k$ entries. Let $i^*$ be the optimal change point
and $p_1^*$ and $p_2^*$ be the corresponding optimal parameters. 
First, let 
\[
	P = \set{ \freq{i, k} \mid i = 1, \ldots, k}
\]
be the set of candidate model parameters. We
know that the optimal model parameter $p_2^* \in P$. Instead of testing every $p
\in P$, we will construct an index set $C$, and define $R = \set{ \freq{i, k} \mid i \in C}$,
such that
for each $p \in P$ there is $r \in R$ such that Eq.~\ref{eq:bracket} holds.
Proposition~\ref{prop:bracket} states that testing the parameters in $R$
yields a $(1 - \epsilon)$ approximation of the second part of the right-hand side in Eq.~\ref{eq:split}.

We wish to keep the set $C$ small, so to generate $C$, we will start with
$i = 1$ and set $C = \set{i}$.  We then look how many values of $P$ we can estimate with 
$\freq{i, k}$, that is, we look for the smallest index for which
Eq.~\ref{eq:bracket} does not hold. We set this index to $i$, add it to $C$, and repeat the process.
We will refer to this procedure as $\findblocks(B, \epsilon)$. 
The detailed pseudo-code for \findblocks is given in Algorithm~\ref{alg:findcands}.

\begin{algorithm}
\caption{$\findblocks(B, \epsilon)$, given a block sequence $B$ of $k$ entries and an estimation requirement
$\epsilon > 0$,
constructs a candidate index set $C$ that is used to estimate
the model parameter $p_2$.}
\label{alg:findcands}
$C \define \set{1}$;
$i \define 1$;
$q \define \freq{1, k}$\;
\While {$i < k$} {
	$\rho \define (\log \freq{i, k} - \log q) / (1 - \epsilon)$\;
	$i \define $ smallest index $j$ s.t. $\log \freq{j, k} - \log q > \rho$, or $k$ if $j$ does not exist\;
	add $i$ to $C$;
}
\Return $C$\;
\end{algorithm}

\begin{proposition}
Assume a block sequence $B$ with $k$ entries, and let $\epsilon > 0$. Set $P =
\set{ \freq{i, k} \mid i = 1, \ldots, k}$.
Let $C = \findblocks(B, \epsilon)$, and let $R = \set{ \freq{i, k} \mid i \in C}$.
Then for each $p \in P$ there is $r \in R$ such that Eq.~\ref{eq:bracket} holds.
\end{proposition}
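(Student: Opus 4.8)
The plan is to combine the monotonicity of the tail averages $\freq{i, k}$ in $i$ with the greedy structure of \findblocks. First I would record the elementary fact that, since $B$ is a block sequence, the block averages $u_j/(u_j+v_j)$ are strictly increasing, so $\freq{i, k}$ --- a weighted average of the block averages from $i$ to $k$ --- is strictly increasing in $i$, because increasing $i$ discards exactly the smallest of the averaged terms. Hence $P$ is the strictly increasing list $\freq{1, k} < \freq{2, k} < \cdots < \freq{k, k}$, and $q = \freq{1, k} = \min P$, so $\log \freq{j, k} - \log q \geq 0$ for every $j$ (we may assume $q > 0$, since otherwise all the log-likelihoods involved are degenerate). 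Using $0 < \epsilon < 1$, I would also note that each iteration strictly advances the index: the threshold $\rho = (\log \freq{i, k} - \log q)/(1 - \epsilon)$ is at least $\log \freq{i, k} - \log q$, so $j = i$ fails the strict inequality in the inner search; thus the loop terminates and $k$ is eventually added to $C$. Write $C = \set{c_0 = 1 < c_1 < \cdots < c_m = k}$, and let $\rho_t = (\log \freq{c_t, k} - \log q)/(1 - \epsilon)$ be the threshold computed when the current index equals $c_t$.

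The core of the argument is then a short case analysis for an arbitrary $p = \freq{j, k} \in P$. If $j \in C$, take $r = \freq{j, k} = p$: then $\log r = \log p$ and, since $\epsilon > 0$ and $\log p \geq \log q$, we have $(1 - \epsilon)(\log p - \log q) \leq \log p - \log q$, so Eq.~\ref{eq:bracket} holds. Otherwise there is a unique $t < m$ with $c_t < j < c_{t+1}$, and I would take $r = \freq{c_t, k} \in R$. The upper bound $\log r \leq \log p$ is immediate from the monotonicity of $\freq{\cdot, k}$ and $c_t < j$. For the lower bound, the fact that $c_{t+1}$ is the \emph{smallest} index whose value exceeds the threshold $\rho_t$, together with $j < c_{t+1}$, gives $\log \freq{j, k} - \log q \leq \rho_t$, that is, $\log p - \log q \leq (\log r - \log q)/(1 - \epsilon)$; rearranging yields exactly $\log q + (1 - \epsilon)(\log p - \log q) \leq \log r$, which is the left inequality of Eq.~\ref{eq:bracket}. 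Since $c_0 = 1$ and $c_m = k$, every $j \in \set{1, \ldots, k}$ falls into one of the two cases, so the claim holds for all $p \in P$.

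The only places where any care is needed are the bookkeeping items: making the monotonicity of $\freq{i, k}$ precise, checking that the greedy step always advances (so $C$ is well defined and ends at $k$), and handling the benign corner cases $j \in C$, $q > 0$, and "the index $j$ does not exist" in the inner search --- the last being correctly folded into $c_{t+1} = k$ once the search is read as ranging over all indices up to $k$. I expect this mild bookkeeping to be the main obstacle; the inequality chain itself is a one-line rearrangement of the loop invariant defining $\rho_t$ and the stopping rule of the inner search, and carries essentially no difficulty.
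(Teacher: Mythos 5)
Your proof is correct and follows essentially the same route as the paper's (very terse) argument: any $p = \freq{j,k}$ with $j \notin C$ is bracketed by consecutive candidates $c_t < j < c_{t+1}$, and the stopping rule of the inner search gives $\log p - \log q \leq \rho_t = (\log \freq{c_t,k} - \log q)/(1-\epsilon)$, which rearranges to Eq.~\ref{eq:bracket} with $r = \freq{c_t,k}$. The paper states this in one sentence; your write-up supplies the supporting details (monotonicity of $\freq{\cdot,k}$, termination of the greedy loop, and the endpoint cases), all of which check out.
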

\begin{proof}
Let $p \in P \setminus R$. This is only possible if there is a smaller value $r \in R$ such that
$(1 - \epsilon)(\log p - \log q) < \log r - \log q$ holds. 
\qed
\end{proof}

Finding the next index $i$ in \findblocks can be done with a binary search in $\bigO{\log \abs{B}}$ time.
Thus, \findblocks runs in $\bigO{\abs{C} \log n}$ time. Next result shows that $\abs{C} \in \bigO{\epsilon^{-1} \log n}$,
which brings the computational complexity of \findblocks to $\bigO{\epsilon^{-1} \log^2 n}$.

\begin{proposition}
\label{prop:candsize}
Assume a block sequence $B$ with $k$ entries generated from a binary sequence $S$ with $n$ entries, and let $\epsilon > 0$.
Let $P = \set{ \freq{i, k} \mid i = 1, \ldots, k}$.
Assume an increasing sequence $R = (r_i) \subseteq P$.
Let $q = \freq{1, k}$.
If
\begin{equation}
\label{eq:ladder}
	\log q + (1 - \epsilon) (\log r_{i} - \log q) > \log r_{i - 1},
\end{equation}
then $\abs{R} \in \bigO{\frac{\log n}{\epsilon}}$.
\end{proposition}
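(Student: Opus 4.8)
The plan is to bound $\abs{R}$ by controlling how fast the quantity $\log r_i - \log q$ must grow along the sequence. Write $z_i = \log r_i - \log q$. Since $R \subseteq P$ and $q = \freq{1,k}$ is the smallest element of $P$ (it is the overall average, which in a block sequence is at most every suffix average $\freq{i,k}$), we have $z_i \geq 0$ for all $i$, and in fact $z_i > 0$ for all but possibly the first element. Condition~\eqref{eq:ladder} rearranges to $z_i > \frac{1}{1-\epsilon} z_{i-1}$, i.e.\ the sequence $z_i$ grows at least geometrically with ratio $(1-\epsilon)^{-1} \geq 1 + \epsilon$ once it becomes positive. Hence after the first positive term, say $z_{i_0} = z_{\min} > 0$, we get $z_{i_0 + t} \geq (1+\epsilon)^t z_{\min}$, so the number of steps $t$ before $z$ exceeds the largest possible value $z_{\max}$ satisfies $(1+\epsilon)^t \leq z_{\max}/z_{\min}$, giving $t \leq \frac{\log(z_{\max}/z_{\min})}{\log(1+\epsilon)} \in \bigO{\epsilon^{-1}\log(z_{\max}/z_{\min})}$, using $\log(1+\epsilon) \geq \epsilon/2$ for small $\epsilon$ (and noting the claim is only interesting for small $\epsilon$). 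So $\abs{R} \in \bigO{1 + \epsilon^{-1}\log(z_{\max}/z_{\min})}$.

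It then remains to show $\log(z_{\max}/z_{\min}) \in \bigO{\log n}$, i.e.\ that $z_{\max}/z_{\min}$ is polynomially bounded in $n$. For the numerator, $z_{\max} = \log r_{\abs{R}} - \log q \leq -\log q = \log\frac{a+b}{a}$, where $(a,b) = B[1;k]$; since $a \geq 1$ (if $a = 0$ all frequencies vanish and $R$ is a single point) and $a + b = n$, we get $z_{\max} \leq \log n$. For the denominator we need a lower bound on the smallest positive gap $\log r - \log q$ over $r \in P$ with $r > q$. Both $r$ and $q$ are rationals of the form $\freq{i,k} = \frac{a_i}{a_i + b_i}$ with $a_i + b_i \leq n$; if $r > q$ then $\log r - \log q = \log(r/q) \geq \log\bigl(1 + (r - q)/q\bigr) \geq \tfrac{1}{2}(r-q)/q \geq \tfrac{1}{2}(r - q)$, and $r - q$ is a difference of two distinct fractions each with denominator at most $n$, hence $r - q \geq 1/n^2$. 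Therefore $z_{\min} \geq \tfrac{1}{2}n^{-2}$, and $\log(z_{\max}/z_{\min}) \leq \log(2 n^2 \log n) \in \bigO{\log n}$. Combining with the previous paragraph yields $\abs{R} \in \bigO{\epsilon^{-1}\log n}$.

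The main obstacle is the lower bound on $z_{\min}$: one must argue that the smallest positive jump in the values $\log \freq{i,k}$ is not too tiny, and the cleanest route is to pass from the logarithmic gap back to the arithmetic gap of the underlying rationals (whose denominators are bounded by $n$), using that $q$ is bounded away from $0$ by $1/n$ so the factor $1/q$ in $\log(r/q) \approx (r-q)/q$ costs only another factor of $n$. A minor point to handle carefully is the edge case where $R$ has at most one positive term (or $a = 0$), where the bound is trivially $\bigO{1}$; and the constant in $\log(1+\epsilon) \geq \epsilon/2$ requires, say, $\epsilon \leq 1$, which is the only regime of interest.
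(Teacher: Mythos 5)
Your proof is correct and follows essentially the same route as the paper's: both arguments show that Eq.~\ref{eq:ladder} forces the gaps $\log r_i - \log q$ to grow geometrically with ratio at least $(1-\epsilon)^{-1}$, lower-bound the smallest positive gap by roughly $n^{-2}$ using the fact that the frequencies are rationals with denominators at most $n$, and upper-bound the largest gap by $\log n$. The only cosmetic difference is that the paper bounds the initial log-gap directly via $\log(xv) - \log(uy) \geq \log(1 + 1/(uy))$ rather than passing through the arithmetic gap $r - q \geq 1/n^2$ as you do.
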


\begin{proof}
We can rewrite Eq.~\ref{eq:ladder} as
	$(1 - \epsilon) (\log r_{i} - \log q) > \log r_{i - 1} - \log q$
which automatically implies that
\[
	(1 - \epsilon)^i (\log r_{i + 2} - \log q) > \log r_{2} - \log q\quad.
\]
To lower-bound the right-hand side, let us write $r_2 = x / y$ and $q = u / v$,
where $x$, $y$, $u$, and $v$ are integers with $y, v \leq n$. Note that $r_2 >
q$, otherwise we violate Eq.~\ref{eq:ladder} when $i = 2$. Hence, we have $xv \geq uy + 1$. Then
\[
\begin{split}
	\log r_2 - \log q & = \log xv - \log uy 
	                  \geq \log(uy + 1) - \log uy = \log (1 + \frac{1}{uy}) \\
					  & \geq \log(1 + \frac{1}{n^2}) \geq \frac{n^{-2}}{1 + n^{-2}} = \frac{1}{1 + n^{2}}\quad.
\end{split}
\]
We can also upper-bound the left-hand side with
\[
	\log r_{i + 2} - \log q \leq \log 1 - \log u / v = \log v / u \leq \log n\quad.
\]
Combining the three previous inequalities leads to
\[
	\log n \geq \log r_{i + 2} - \log q > \frac{\log r_{2} - \log q}{ (1 - \epsilon)^i} \geq \frac{1}{(1 - \epsilon)^i}\frac{1}{1 + n^{2}}\quad.
\]
Solving for $i$,
\[
	i \leq \frac{\log(1 + n^2) + \log \log n}{\log \frac{1}{1 - \epsilon} } \leq \frac{\log(1 + n^2) + \log \log n}{\epsilon } \in \bigO{\frac{\log n}{\epsilon}},
\]
completes the proof.\qed
\end{proof}

We can now approximate $p_2^*$. Our next step is to show how to find similar value for $p_1^*$. Note that
we cannot use the previous results immediately because we assumed that $p \geq
q$ in Proposition~\ref{prop:bracket}. However, we can fix this by simply switching the labels in $S$.

\begin{proposition}
\label{prop:bracket2}
Assume $a, b > 0$, and let $p = a / (a + b)$.
Assume $q$ with $0 < p \leq q$.
Assume also $\epsilon > 0$.  Define $h(x) = \lhood{a, b; x} - \lhood{a, b; q}$.
Assume $r$ such that
\begin{equation}
	\label{eq:bracket2}
	\log (1 - q) + (1 - \epsilon) (\log(1 - p) - \log(1 - q)) \leq \log(1 - r) \leq \log(1 - p)\quad. 
\end{equation}
Then $h(r) \geq (1 - \epsilon) h(p)$.
\end{proposition}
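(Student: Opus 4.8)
The plan is to reduce Proposition~\ref{prop:bracket2} to Proposition~\ref{prop:bracket} by a label-flipping (equivalently, a reparametrization $x \mapsto 1 - x$) argument. Concretely, I would set $\tilde p = 1 - p$, $\tilde q = 1 - q$, and $\tilde r = 1 - r$. The hypothesis $0 < p \leq q$ becomes $0 < \tilde q \leq \tilde p$, and $p = a/(a+b)$ becomes $\tilde p = b / (a + b)$, so $\tilde p$ is exactly the MLE parameter for the \emph{flipped} counts $(\tilde a, \tilde b) = (b, a)$. The key observation is the identity $\lhood{a, b; x} = \lhood{b, a; 1 - x}$, which follows directly from the definition $\lhood{a, b; x} = a \log x + b \log(1 - x)$. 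Hence if I define $\tilde h(y) = \lhood{b, a; y} - \lhood{b, a; \tilde q}$, then $\tilde h(1 - x) = h(x)$ for all $x$; in particular $\tilde h(\tilde r) = h(r)$ and $\tilde h(\tilde p) = h(p)$.

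Next I would check that the bracketing hypothesis~\eqref{eq:bracket2} translates exactly into the hypothesis~\eqref{eq:bracket} of Proposition~\ref{prop:bracket} applied to the flipped quantities. Rewriting Eq.~\ref{eq:bracket2} in terms of $\tilde p, \tilde q, \tilde r$ gives
\[
	\log \tilde q + (1 - \epsilon)(\log \tilde p - \log \tilde q) \leq \log \tilde r \leq \log \tilde p,
\]
which is precisely Eq.~\ref{eq:bracket} with $(a, b, p, q, r)$ replaced by $(b, a, \tilde p, \tilde q, \tilde r)$. The side conditions also match: Proposition~\ref{prop:bracket} needs $b, a > 0$ (given) and $0 < \tilde q \leq \tilde p$ (just established), and $\tilde p = b/(b + a)$ is the maximizing parameter for $\lhood{b, a; \cdot}$, as required.

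Finally I would invoke Proposition~\ref{prop:bracket} directly: it yields $\tilde h(\tilde r) \geq (1 - \epsilon)\tilde h(\tilde p)$. Translating back through $\tilde h(\tilde r) = h(r)$ and $\tilde h(\tilde p) = h(p)$ gives $h(r) \geq (1 - \epsilon) h(p)$, which is the claim. I do not expect any genuine obstacle here; the only things to be careful about are (i) making sure the inequality directions survive the substitution $x \mapsto 1 - x$ and $\log q \mapsto \log(1-q)$ (they do, since the substitution is monotone decreasing on parameters but we have also swapped the roles of the two terms consistently), and (ii) confirming that $q < 1$ and $r < 1$ so that $\log(1 - q)$ and $\log(1 - r)$ are well defined — this is implicit in the statement since $0 < p \leq q$ and the bracket forces $r$ between $p$ and something below $q$. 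An alternative, essentially equivalent, route would be to repeat the proof of Proposition~\ref{prop:bracket} verbatim with $f(u) = h(1 - \exp u)$ in place of $h(\exp u)$ and check concavity again, but the reduction above is shorter and reuses the already-proved statement.
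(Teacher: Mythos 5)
Your proposal is correct and is exactly the paper's own argument: the paper's proof also sets $a' = b$, $b' = a$, $q' = 1 - q$, $r' = 1 - r$ and invokes Proposition~\ref{prop:bracket} on the flipped variables. You simply spell out the details (the identity $\lhood{a,b;x} = \lhood{b,a;1-x}$ and the translation of Eq.~\ref{eq:bracket2} into Eq.~\ref{eq:bracket}) that the paper leaves implicit.
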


\begin{proof}
Set $a' = b$, $b' = a$, $q' = 1 - q$, and $r' = 1 - r$.
The proposition follows immediately from Proposition~\ref{prop:bracket} when applied to these variables.\qed
\end{proof}

Proposition~\ref{prop:bracket2} leads to an algorithm, similar to \findblocks, for generating
candidates for $p_1^*$. We refer to this algorithm as $\findblocks'$, see Algorithm~\ref{alg:findcands2}. 

\begin{algorithm}[ht!]
\caption{$\findblocks'(B, \epsilon)$, given a block sequence $B$ of $k$ entries and an estimation requirement
$\epsilon > 0$,
constructs a candidate index set $C$ that is used to estimate
the model parameter $p_1$.}
\label{alg:findcands2}
$C \define \set{k}$;
$i \define k$;
$q \define \freq{1, k}$\;
\While {$i > 1$} {
	$\rho \define (\log(1 - \freq{1, i - 1}) - \log (1 - q)) / (1 - \epsilon)$\;
	$i \define $ largest index $j$ s.t. $\log ( 1- \freq{1, j - 1}) - \log (1 - q) > \rho$, or $1$ if $j$ does not exist\;
	add $i$ to $C$;
}
\Return $C$\;
\end{algorithm}

Assume that we have computed two sets of candidate indices $C_1$ and $C_2$; the
first set is meant to be used to estimate $p_1^*$, while the second set is meant
to be used to estimate $p_2^*$. The final step is to determine what combinations
of parameters should we check. A naive approach would be to test every possible
combination. This leads to $\bigO{\abs{C_1}\abs{C_2}}$ tests.

However, since $p_1^*$ and $p_2^*$ are induced by the same change point $i^*$, we
can design
a more efficient approach that leads to only $\bigO{\abs{C_1} + \abs{C_2}}$ tests.
In order to do so, first we combine both candidate sets, $C = C_1 \cup C_2$.
For each index $c_i \in C$, we compute the score $\score{c_i; B}$.
Also, if there are blocks between $c_{i - 1}$ and $c_i$ that are not
included in $C$, that is, $c_{i - 1} + 1 < c_i$, we set $p_1 = \freq{1, c_{i} - 1}$
and $p_2 = \freq{c_{i - 1}, k}$, compute the optimal change point
$j = \findseg(p_1, p_2, B)$, and test $\score{j, B}$. When all tests are done,
we return the index that yielded the best score. We refer to this algorithm
as $\findchange(B, \epsilon)$, and present the pseudo-code
in Algorithm~\ref{alg:change}.

\begin{proposition}
\label{prop:correct}
$\findchange(B, \epsilon)$ yields $(1 - \epsilon)$ approximation guarantee.
\end{proposition}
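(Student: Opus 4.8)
The plan is to show that at least one of the tests performed by \findchange evaluates the score at a change point whose parameters bracket the optimal parameters $p_1^*$ and $p_2^*$ in the sense of Eq.~\ref{eq:bracket} and Eq.~\ref{eq:bracket2} simultaneously, and then invoke Propositions~\ref{prop:bracket} and~\ref{prop:bracket2} together with the split of the objective in Eq.~\ref{eq:split}. So the first step is to set up the bookkeeping: let $i^*$ be the optimal change point for \prbchangeblock, with $p_1^* = \freq{1, i^* - 1}$ and $p_2^* = \freq{i^*, k}$. Because $p_1^* \leq q \leq p_2^*$ (this follows since $B$ is a block sequence and $q$ is the overall average), both propositions are applicable, and each asserts that a $(1-\epsilon)$ fraction of the corresponding part of Eq.~\ref{eq:split} is retained as long as the tested parameter lies in the right logarithmic bracket.

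Next I would locate $i^*$ relative to the combined candidate set $C = C_1 \cup C_2$. Let $c_{i-1}$ be the largest element of $C$ that is $\leq i^*$ and $c_i$ the smallest element of $C$ that is $> i^* - 1$ (so $c_{i-1} \leq i^* \leq c_i$, with $c_{i-1}+1 \le c_i$ unless $i^* \in C$). The key observation is the monotonicity of $\freq{\cdot}$ on a block sequence: since $c_{i-1} \leq i^*$, we have $\freq{c_{i-1}, k} \geq \freq{i^*, k} = p_2^*$, and by construction of \findblocks the index $c_{i-1}$ is chosen so that $\freq{c_{i-1},k}$ still estimates $p_2^*$ within the bracket of Eq.~\ref{eq:bracket} — i.e. $\log q + (1-\epsilon)(\log p_2^* - \log q) \le \log \freq{c_{i-1},k} \le \log p_2^*$, using that $p_2^* \in P$ and the gap between consecutive candidates is exactly what Eq.~\ref{eq:bracket} allows. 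Symmetrically, $c_i \geq i^*$ gives $\freq{1, c_i - 1} \leq p_1^*$, and the construction of $\findblocks'$ guarantees $\freq{1, c_i - 1}$ estimates $p_1^*$ within the bracket of Eq.~\ref{eq:bracket2}. Hence the pair $(p_1, p_2) = (\freq{1, c_i - 1}, \freq{c_{i-1}, k})$ — which is exactly the pair \findchange feeds into \findseg when $c_{i-1}+1 < c_i$ — brackets both optimal parameters at once. (When $i^* \in C$, the direct test $\score{i^*; B}$ already attains the optimum and the argument is trivial.)

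Then I would finish with the approximation calculation. For the bracketing pair $(p_1, p_2)$, Propositions~\ref{prop:bracket} and~\ref{prop:bracket2} give, writing the two summands of Eq.~\ref{eq:split} at the optimum as $h_1(p_1^*)$ and $h_2(p_2^*)$,
\[
	\lhood{a_1', b_1', p_1} - \lhood{a_1', b_1', q} \geq (1-\epsilon)\, h_1(p_1^*)
	\quad\text{and}\quad
	\lhood{a_2', b_2', p_2} - \lhood{a_2', b_2', q} \geq (1-\epsilon)\, h_2(p_2^*),
\]
where $(a_1', b_1')$ and $(a_2', b_2')$ are the counts at the optimal split $i^*$; note the propositions hold for the fixed counts at $i^*$, not at the tested index, which is why we must argue at $i^*$. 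Summing, $\score{i^*; p_1, p_2, B} \geq (1-\epsilon)\,\score{i^*; B} = (1-\epsilon)\,\mathit{OPT}$. Since $\findseg(p_1, p_2, B)$ returns the index $j$ maximizing $\score{\cdot; p_1, p_2, B}$ (Proposition~\ref{prop:maxscorepar}), we get $\score{j; p_1, p_2, B} \geq \score{i^*; p_1, p_2, B} \geq (1-\epsilon)\,\mathit{OPT}$, and finally $\score{j; B} \geq \score{j; p_1, p_2, B}$ because $\score{j; B}$ uses the likelihood-maximizing parameters at split $j$ whereas $\score{j; p_1, p_2, B}$ uses fixed ones. As \findchange returns the best score over all its tests, it returns something at least $(1-\epsilon)\,\mathit{OPT}$.

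\textbf{Main obstacle.} The delicate point is the bracketing claim in the second paragraph: I must verify that the index $c_{i-1}$ produced by \findblocks — which is defined relative to the *entire* candidate ladder, not tailored to $i^*$ — genuinely satisfies Eq.~\ref{eq:bracket} for $p = p_2^* = \freq{i^*, k}$, and likewise $c_i$ from $\findblocks'$ for $p_1^*$. This requires carefully chasing the stopping rule of the while-loop (the "smallest index $j$ such that $\log\freq{j,k} - \log q > \rho$") to show that the previous candidate does not overshoot the bracket for any intermediate $p \in P$, in particular for $p_2^*$; essentially it is the statement proved for the set $R$ in the proposition preceding Proposition~\ref{prop:candsize}, applied at the specific value $p_2^*$. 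A secondary subtlety is making sure the two one-sided brackets can be satisfied simultaneously by the single pair $(c_{i-1}, c_i)$ straddling $i^*$ — this is where the monotonicity $\freq{1, c_i - 1} \le p_1^* \le q \le p_2^* \le \freq{c_{i-1}, k}$ does the work, but one has to be careful about the off-by-one in the block indexing (interval $[1; c_i - 1]$ versus $[c_{i-1}; k]$) and the edge cases $c_{i-1} = 1$ or $c_i = k$.
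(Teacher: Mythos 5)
Your proposal follows essentially the same route as the paper's proof: locate $i^*$ between two consecutive candidates of $C = C_1 \cup C_2$, observe that the pair $(r_1, r_2) = (\freq{1, c_j - 1}, \freq{c_{j-1}, k})$ fed to \findseg satisfies Eq.~\ref{eq:bracket2} and Eq.~\ref{eq:bracket} for $p_1^*$ and $p_2^*$ respectively, and then chain $\score{i; B} \geq \score{i; r_1, r_2, B} \geq \score{i^*; r_1, r_2, B} \geq (1 - \epsilon)\score{i^*; B}$; your version is in fact more explicit than the paper's about why the propositions must be applied at the counts of $i^*$ and why $\score{j;B} \geq \score{j; r_1, r_2, B}$.

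One correction: your monotonicity claims are stated backwards. Since $\freq{j,k}$ is increasing in $j$ on a block sequence, $c_{j-1} \leq i^*$ gives $\freq{c_{j-1}, k} \leq \freq{i^*, k} = p_2^*$ (not $\geq$), and likewise $\freq{1, c_j - 1} \geq p_1^*$ (not $\leq$). These corrected directions are exactly the one-sided conditions $\log r \leq \log p$ in Eq.~\ref{eq:bracket} and $\log(1-r) \leq \log(1-p)$ in Eq.~\ref{eq:bracket2}, and they are consistent with the bracket you then write down, so the argument survives the fix; the other side of each bracket comes from the greedy stopping rule of \findblocks and $\findblocks'$ (the unnumbered proposition following Algorithm~\ref{alg:findcands}), as you correctly identify in your closing remarks.
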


\begin{proof}
Let $i^*$ be the optimal value with the corresponding parameters $p_1^*$ and
$p_2^*$.  Let $C_1$, $C_2$ and $C$ be the sets as defined in
Algorithm~\ref{alg:change}. If $i^* \in C$, then we are done.  Assume that $i^*
\notin C$. Then there are $c_{j - 1} < i^* < c_j$, since $1, k \in C$.  Let $r_2
= \freq{c_{j - 1},k}$. Then $r_2$ and $p_2^*$ satisfy Eq.~\ref{eq:bracket} by
definition of $C_2$.  Let $r_1 = \freq{1, c_{j} - 1}$. Then $r_1$ and $p_1^*$
satisfy Eq.~\ref{eq:bracket2} by definition of $C_1$. Let $i$ be the optimal
change point for $r_1$ and $r_2$, that is, $i = \findseg(r_1, r_2, B)$.

Propositions~\ref{prop:bracket}~and~\ref{prop:bracket2} together with Eq.~\ref{eq:split}
imply that
\[
	\score{i; B} \geq \score{i; r_1, r_2, B} \geq \score{i^*; r_1, r_2, B} \geq (1 - \epsilon) \score{i^*; B}\quad.
\]
This completes the proof.\qed
\end{proof}

\begin{algorithm}
\caption{$\findchange(B, \epsilon)$, yields $(1 - \epsilon)$ approximation guarantee for \prbchangeblock.}
\label{alg:change}
$C_2 \define$ $\findblocks(B, \epsilon)$\;
$C_1 \define $ $\findblocks'(B, \epsilon)$\;
$C \define C_1 \cup C_2$\;

\ForEach{$c_j \in C$} {
	test $\score{c_j; B}$\;
	\If {$c_{j - 1} + 1 < c_j$} {
		$r_1 \define \freq{1,c_{j} - 1}$\;
		$r_2 \define \freq{c_{j - 1},k}$\;
		$i \define \findseg(r_1, r_2, B)$\;
		test $\score{i; B}$\;
	}
	\Return index $i^*$ having the best score $\score{i; B}$ among the tested indices\;
}

\end{algorithm}

We complete this section with computational complexity analysis. The two calls
of \findblocks require $\bigO{\epsilon^{-1} \log^2 n}$ time. The list $C$ has
$\bigO{\epsilon^{-1} \log n}$ entries, and a single call of \findseg for each $c \in C$ requires $\bigO{\log n}$ time.
Consequently, the running time for \findchange is $\bigO{\epsilon^{-1} \log^2 n}$.

\section{Related work}\label{sec:related}

Many techniques have been proposed for change detection in a stream setting. We will
highlight some of these techniques.  For a fuller picture, we refer the reader
to a survey by~\citet{Aminikhanghahi2017}, and a book by~\citet{Basseville93}.

A standard approach for change point detection is to split the stored data
in two segments, and compare the two segments; if the segments are different, then
a change has happened.
\citet{Bifet07} proposed an adaptive sliding window approach: if the current window
contains a split such that the averages of the two portions are different enough,
then the older portion is dropped from the window.
\citet{Nishida07} compared the accuracy of recent samples against the overall accuracy
using a statistical test.
\citet{Kifer04} proposed a family of distances between distributions and analyzed them
in the context of change point detection.
Instead of modeling segments explicitly, \citet{Kawahara12} proposed
estimating density ratio directly. 
\citet{Dries09} studied transformations a multivariate stream into a
univariate stream to aid change point detection.
\citet{Harel14} detected change by comparing the loss in a test segment against
a similar loss in a permuted sequence.

Instead of explicitly modeling the change point, \citet{Ross12} used
exponential decay to compare the performance of recent samples against the
overall performance. \citet{Gama04,Baena06} proposed a detecting change by
comparing current average and standard deviation against the smallest observed
average and standard deviation.  Also avoiding an explicit split, a Bayesian
approach for modeling the time since last change point was proposed
by~\citet{Adams07}.

An offline version of change point detection is called \emph{segmentation}.
Here we are given a sequence of entries and a budget $k$. The goal is divide a
sequence into $k$ minimizing some cost function. If the global objective is a
sum of individual segment costs, then the problem can be solved with a classic
dynamic program approach~\citep{bellman:61:on} in $\bigO{n^2k}$ time. As this
may be too slow speed-up techniques yielding approximation guarantees have been
proposed~\citep{terzi:06:efficient,guha:06:estimate,tatti2019segmentation}.
If the cost function is based on one-parameter
log-linear models, it is possible to speed-up the segmentation problem
significantly in practice~\citep{tatti2013fast}, even though the worst-case running time
remains $\bigO{n^2 k}$. 
\citet{guha:07:linear} showed that if the objective is the maximum of the
individual segment costs, then
we can compute the exact solution using only $\bigO{k^2 \log^2 n}$ evaluations of the individual segment costs.

\section{Experimental evaluation}\label{sec:exp}

For our experiments, we focus on analyzing the effect
of the approximation guarantee $\epsilon$, as well as 
the parameter $\tau$.\footnote{Recall that we say that change occurs if it is larger than $\sigma = \tau + \log n$.}
\footnote{The implementation is available at \url{https://version.helsinki.fi/dacs/}.}
Here we will use synthetic sequences.  In addition, we present a small case
study using network traffic data.

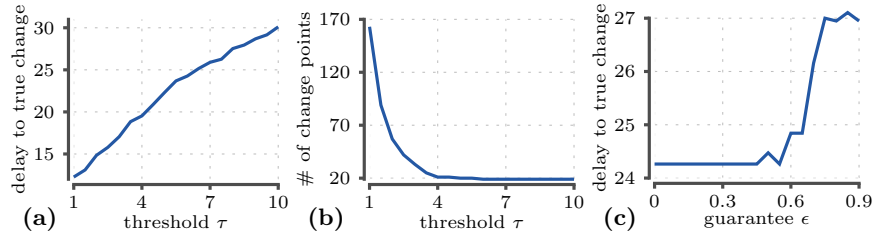
\begin{figure}
\subfloat[\label{fig:stepa}]{
\begin{tikzpicture}
\begin{axis}[xlabel={threshold $\tau$}, ylabel= {delay to true change},
    width = 4.3cm,
    ymin = 12,
    ymax = 31,
    scaled y ticks = false,
    cycle list name=yaf,
    yticklabel style={/pgf/number format/fixed},
	no markers,
	xtick = {1, 4, 7, 10},
    ]
\addplot table[x index = 0, y index = 1, header = false, skip first n = 2] {step_threshold.res};
\pgfplotsextra{\yafdrawaxis{1}{10}{12}{31}}
\end{axis}
\end{tikzpicture}}
\subfloat[\label{fig:stepb}]{
\begin{tikzpicture}
\begin{axis}[xlabel={threshold $\tau$}, ylabel= {\# of change points},
    width = 4.3cm,
    ymin = 19,
    ymax = 170,
    scaled y ticks = false,
    cycle list name=yaf,
    yticklabel style={/pgf/number format/fixed},
	no markers,
	xtick = {1, 4, 7, 10},
	ytick = {20, 70, 120, 170}
    ]
\addplot table[x index = 0, y index = 2, header = false, skip first n = 2] {step_threshold.res};
\pgfplotsextra{\yafdrawaxis{1}{10}{19}{170}}
\end{axis}
\end{tikzpicture}}
\subfloat[\label{fig:stepc}]{
\begin{tikzpicture}
\begin{axis}[xlabel={guarantee $\epsilon$}, ylabel= {delay to true change},
    width = 4.3cm,
    xmin = 0,
    xmax = 0.9,
    ymin = 24.0,
    ymax = 27,
    scaled y ticks = false,
    cycle list name=yaf,
    yticklabel style={/pgf/number format/fixed},
	no markers,
	xtick = {0, 0.3, 0.6, 0.9},
    ]
\addplot table[x expr = {1 - \thisrowno{0}}, y index = 1, header = false] {step_approx.res};
\pgfplotsextra{\yafdrawaxis{0}{0.9}{24.0}{27}}
\end{axis}
\end{tikzpicture}}

\caption{Change point detection statistics as a function of threshold parameter $\tau$ and approximation guarantee $\epsilon$ \dtname{Step} data:
(a) average delay for discovering a true change point ($\epsilon = 0$), 
(b) number of discovered change points ($\epsilon = 0$), and
(c) average delay for discovering a true change point ($\tau = 6$). 
Note that in \dtname{Step} there are 19 true change points.
For $\tau = 0.5$, the algorithm had average delay of 1.42 to a true change point
but reported 46\,366 change points (these values are omitted due to scaling
issues).
}

\end{figure}

\textbf{Synthetic sequences:}
We generated 3 synthetic sequences, each of length $200\,000$.
For simplicity we will write $\bernoulli{p}$ to mean a bernoulli random variable
with probability of 1 being $p$.
The first sequence, named \dtname{Ind}, consists of $200\,000$ samples from
$\bernoulli{1/2}$, that is, fair coin flips.  The second sequence, named
\dtname{Step}, consists of $10\,000$ samples from $\bernoulli{1/4}$ followed by
$10\,000$ samples from $\bernoulli{3/4}$, repeated 10 times.  The third
sequence, named \dtname{Slope}, includes 10 segments, each segment consists of
$10\,000$ samples from $\bernoulli{p}$, where $p$ increases linearly from $1/4$
to $3/4$, followed by $10\,000$ samples from $\bernoulli{p}$, where $p$
decreases linearly from $3/4$ to $1/4$.
In addition, we generated 10 sequences, collectively named \dtname{Hill}.  The
length of the sequences varies from $100\,000$ to $1\,000\,000$ with increments
of $100\,000$. Each sequence consists of samples from $\bernoulli{p}$, where
$p$ increases linearly from $1/4$ to $3/4$.

\textbf{Results:}
We start by studying the effect of the threshold parameter $\tau$.  Here, we
used \dtname{Step} sequence; this sequence has 19 true change points.  In
Figure~\ref{fig:stepa}, we show the average delay of discovering the true
change point, that is, how many entries are needed, on average, before a change
is discovered after each true change. In Figure~\ref{fig:stepb}, we also show
how many change points we discovered: ideally we should find only 19 points.
In both experiments we set $\epsilon = 0$. We see from the results that the
delay grows linearly with $\tau$, whereas the number of false change points is
significant for small values of $\tau$ but drop quickly as $\tau$ grows.  For
$\tau = 6$ we detected the ideal 19 change points. We will use this value for
the rest of the experiments.

\begin{figure}[ht!]
\begin{tabular}{rrl}
\subfloat[\label{fig:syntha}]{
\begin{tikzpicture}[baseline=(current bounding box.north)]
\begin{axis}[xlabel={guarantee $\epsilon$}, ylabel= {min approx. ratio},
    width = 4.3cm,
    xmin = 0,
    xmax = 0.9,
    scaled y ticks = false,
    cycle list name=yaf,
    yticklabel style={/pgf/number format/fixed},
	no markers,
	legend entries = {\dtname{Ind}, \dtname{Step}, \dtname{Slope}},
	legend to name = leg:synth,
	xtick = {0, 0.3, 0.6, 0.9},
    ]
\addplot table[x expr = {1 - \thisrowno{0}}, y index = 1, header = false] {ind.res};
\addplot table[x expr = {1 - \thisrowno{0}}, y index = 1, header = false] {step.res};
\addplot table[x expr = {1 - \thisrowno{0}}, y index = 1, header = false] {slope.res};
\pgfplotsextra{\yafdrawaxis{0}{0.9}{0.2}{1}}
\end{axis}
\end{tikzpicture}} &

\subfloat[\label{fig:synthb}]{
\begin{tikzpicture}[baseline=(current bounding box.north)]
\begin{axis}[xlabel={guarantee $\epsilon$}, ylabel= {avg approx. ratio},
    width = 4.3cm,
    xmin = 0,
    xmax = 0.9,
    ymin = 0.97,
    scaled y ticks = false,
    cycle list name=yaf,
    yticklabel style={/pgf/number format/fixed},
	no markers,
	xtick = {0, 0.3, 0.6, 0.9},
    ]
\addplot table[x expr = {1 - \thisrowno{0}}, y index = 2, header = false] {ind.res};
\addplot table[x expr = {1 - \thisrowno{0}}, y index = 2, header = false] {step.res};
\addplot table[x expr = {1 - \thisrowno{0}}, y index = 2, header = false] {slope.res};
\pgfplotsextra{\yafdrawaxis{0}{0.9}{0.97}{1}}
\end{axis}
\end{tikzpicture}}

&
\raisebox{-1.34cm}{\ref{leg:synth}}
\\

\subfloat[\label{fig:synthc}]{
\begin{tikzpicture}
\begin{axis}[xlabel={guarantee $\epsilon$}, ylabel= {$\abs{C} / n$},
    width = 4.3cm,
    xmin = 0,
    xmax = 0.9,
    ymin = 0.0,
    ymax = 0.006,
    cycle list name=yaf,
    yticklabel style={/pgf/number format/fixed},
	no markers,
	tick scale binop=\times,
	xtick = {0, 0.3, 0.6, 0.9},
    ]
\addplot table[x expr = {1 - \thisrowno{0}}, y index = 3, header = false] {ind.res};
\addplot table[x expr = {1 - \thisrowno{0}}, y index = 3, header = false] {step.res};
\addplot table[x expr = {1 - \thisrowno{0}}, y index = 3, header = false] {slope.res};
\pgfplotsextra{\yafdrawaxis{0}{0.9}{0}{0.006}}
\end{axis}
\end{tikzpicture}} &

\subfloat[\label{fig:synthd}]{
\begin{tikzpicture}
\begin{axis}[xlabel={guarantee $\epsilon$}, ylabel= {$\abs{C} / k$},
    width = 4.3cm,
    xmin = 0,
    xmax = 0.9,
    cycle list name=yaf,
    yticklabel style={/pgf/number format/fixed},
	no markers,
	tick scale binop=\times,
	xtick = {0, 0.3, 0.6, 0.9},
    ]
\addplot table[x expr = {1 - \thisrowno{0}}, y index = 4, header = false] {ind.res};
\addplot table[x expr = {1 - \thisrowno{0}}, y index = 4, header = false] {step.res};
\addplot table[x expr = {1 - \thisrowno{0}}, y index = 4, header = false] {slope.res};
\pgfplotsextra{\yafdrawaxis{0}{0.9}{0.38}{1}}
\end{axis}
\end{tikzpicture}} &

\subfloat[\label{fig:synthe}]{
\begin{tikzpicture}
\begin{axis}[xlabel={guarantee $\epsilon$}, ylabel= {running time (s)},
    width = 4.3cm,
    xmin = 0,
    xmax = 0.9,
	ymin = 20,
	ymax = 50,
    cycle list name=yaf,
    yticklabel style={/pgf/number format/fixed},
	no markers,
	tick scale binop=\times,
	xtick = {0, 0.3, 0.6, 0.9},
    ]
\addplot table[x expr = {1 - \thisrowno{0}}, y index = 5, header = false] {ind.res};
\addplot table[x expr = {1 - \thisrowno{0}}, y index = 5, header = false] {step.res};
\addplot table[x expr = {1 - \thisrowno{0}}, y index = 5, header = false] {slope.res};
\pgfplotsextra{\yafdrawaxis{0}{0.9}{20}{50}}
\end{axis}
\end{tikzpicture}}

\end{tabular}

\caption{Performance metrics as a function of approximation guarantee $\epsilon$ on synthetic data. 
Y-axes are as follows:
(a) minimum of ratio $\findchange(B, \epsilon) / \mathit{OPT}$,
(b) average of ratio $\findchange(B, \epsilon) / \mathit{OPT}$,
(c) number of candidates tested / window size (note that $y$-axis is scaled),
(d) number of candidates tested / number of blocks, and
(e) running time in seconds.
}

\end{figure}
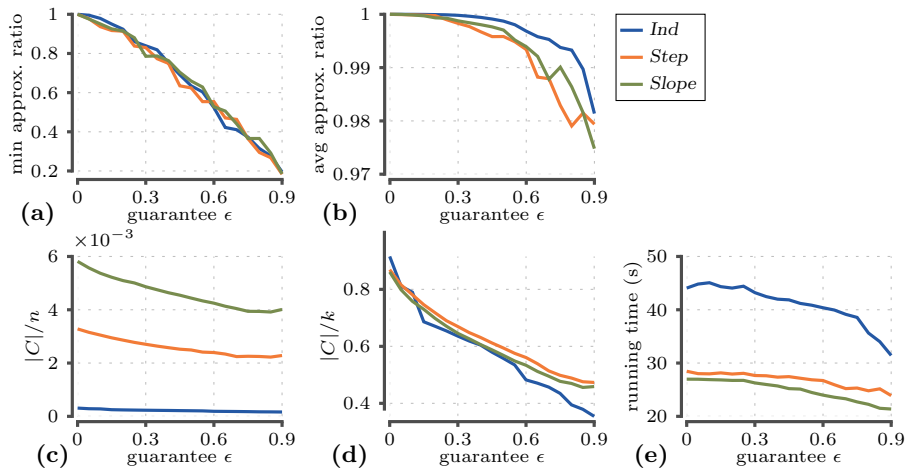

Our next step is to study the quality of the results as a function of $\epsilon$ on synthetic data.
Here we measure the ratio of the scores $g = \findchange(B, \epsilon)$ and
$\mathit{OPT} = \findchange(B, 1)$, that is, the score of the solution to \prbchange.
Note we include all tests, not just the ones that resulted in declaring
a change.
Figure~\ref{fig:syntha} shows the smallest ratio that we encountered as a function of $\epsilon$,
and
Figure~\ref{fig:synthb} shows the average ratio as a function of $\epsilon$.
We see in Figure~\ref{fig:syntha} that the worst case behaves linearly as a function of $\epsilon$.
As guaranteed by Proposition~\ref{prop:correct}, the worst case ratio stays
above $(1 - \epsilon)$. While the worst-case is relatively close
to its theoretical boundary, the average case, shown in Figure~\ref{fig:synthb},
performs significantly better with average ratio being above $0.97$ even for $\epsilon = 0.9$.
The effect of $\epsilon$ on the actual change point detection is demonstrated in Figure~\ref{fig:stepc}.
Since, we may miss the optimal value, the detector becomes more conservative,
which increases the delay for discovering true change. However, the increase is moderate (only about 10\%)
even for $\epsilon = 0.9$.

Our next step is to study speed-up in running time.
Figure~\ref{fig:synthc} shows the number of tests performed compared to $n$, the number of entries
from the last change point as a function of $\epsilon$. We see from the results
that there is significant speed-up when compared to the naive $\bigO{n}$ approach;
the number of needed tests is reduced by 2--3 orders of magnitude. The main reason for this reduction
is due to the border points. Reduction due to using \findblocks is shown in Figure~\ref{fig:synthd}.
Here we see that the number of candidates reduces linearly as a function of $\epsilon$, reducing
the number of candidates roughly by 1/2 for the larger values of $\epsilon$.
The running times (in seconds) are given in Figure~\ref{fig:synthe}. As expected, the running
times are decreasing as a function of $\epsilon$.

\begin{figure}

\subfloat[\label{fig:hilla}]{
\begin{tikzpicture}
\begin{axis}[xlabel={sequence length (in $10^5$)}, ylabel= {running time (m)},
    width = 4.3cm,
    xmin = 1,
    xmax = 10,
	ymin = 0,
	ymax = 18,
    cycle list name=yaf,
    yticklabel style={/pgf/number format/fixed},
	no markers,
	tick scale binop=\times,
	xtick = {1, 4, 7, 10},
	ytick = {0, 6, 12, 18},
    ]
\addplot table[x expr = {\thisrowno{0} / 100000}, y expr = {\thisrowno{2} / 60}, header = false] {hill.res}
	node[pos=0.85, below, font=\scriptsize, color=black, inner sep = 1pt] {$\epsilon = .9$};
\addplot table[x expr = {\thisrowno{0} / 100000}, y expr = {\thisrowno{4} / 60}, header = false] {hill.res}
	node[pos=0.84, above=2pt, font=\scriptsize, color=black, inner sep = 1pt, rotate=10] {$\epsilon = .5$};
\addplot table[x expr = {\thisrowno{0} / 100000}, y expr = {\thisrowno{6} / 60}, header = false] {hill.res}
	node[pos=0.84, above=4pt, font=\scriptsize, color=black, inner sep = 1pt, rotate=30] {$\epsilon = .1$};
\addplot table[x expr = {\thisrowno{0} / 100000}, y expr = {\thisrowno{8} / 60}, header = false] {hill.res}
	node[pos=0.6, above, font=\scriptsize, color=black, inner sep = 1pt, sloped] {$\epsilon = 0$};
\pgfplotsextra{\yafdrawaxis{1}{10}{0}{18}}
\end{axis}
\end{tikzpicture}}
\subfloat[\label{fig:hillb}]{
\begin{tikzpicture}
\begin{axis}[xlabel={sequence length (in $10^5$)}, ylabel= {time / time, $\epsilon = 0$},
    width = 4.3cm,
    xmin = 1,
    xmax = 10,
	ymin = 0,
	ymax = 0.8,
    cycle list name=yaf,
    yticklabel style={/pgf/number format/fixed},
	no markers,
	tick scale binop=\times,
	xtick = {1, 4, 7, 10},
    ]
\addplot table[x expr = {\thisrowno{0} / 100000}, y expr = {\thisrowno{2} / \thisrowno{8}}, header = false] {hill.res}
	node[pos=0.5, below, font=\scriptsize, color=black, inner sep = 1pt, sloped] {$\epsilon = .9$};
\addplot table[x expr = {\thisrowno{0} / 100000}, y expr = {\thisrowno{4} / \thisrowno{8}}, header = false] {hill.res}
	node[pos=0.5, above, font=\scriptsize, color=black, inner sep = 1pt, sloped] {$\epsilon = .5$};
\addplot table[x expr = {\thisrowno{0} / 100000}, y expr = {\thisrowno{6} / \thisrowno{8}}, header = false] {hill.res}
	node[pos=0.5, above, font=\scriptsize, color=black, inner sep = 1pt, sloped] {$\epsilon = .1$};
\pgfplotsextra{\yafdrawaxis{1}{10}{0}{0.8}}
\end{axis}
\end{tikzpicture}}
\subfloat[\label{fig:hillc}]{
\begin{tikzpicture}
\begin{axis}[xlabel={sequence length (in $10^5$)}, ylabel= {$\abs{C} / k$},
    width = 4.3cm,
    xmin = 1,
    xmax = 10,
	ymin = 0,
	ymax = 0.6,
    cycle list name=yaf,
    yticklabel style={/pgf/number format/fixed},
	no markers,
	tick scale binop=\times,
	xtick = {1, 4, 7, 10},
    ]
\addplot table[x expr = {\thisrowno{0} / 100000}, y expr = {\thisrowno{1}}, header = false] {hill.res}
	node[pos=0.5, below, font=\scriptsize, color=black, inner sep = 1pt, sloped] {$\epsilon = .9$};
\addplot table[x expr = {\thisrowno{0} / 100000}, y expr = {\thisrowno{3}}, header = false] {hill.res}
	node[pos=0.45, above, font=\scriptsize, color=black, inner sep = 1pt, sloped] {$\epsilon = .5$};
\addplot table[x expr = {\thisrowno{0} / 100000}, y expr = {\thisrowno{5}}, header = false] {hill.res}
	node[pos=0.53, above=3pt, font=\scriptsize, color=black, inner sep = 1pt, rotate=-30] {$\epsilon = .1$};
\pgfplotsextra{\yafdrawaxis{1}{10}{0}{0.6}}
\end{axis}
\end{tikzpicture}}

\caption{Computational metrics as a function of sequence length for \dtname{Hill} sequences:
(a) running time in minutes,
(b) running time / running time for $\epsilon = 0$, and
(c) number of candidates tested / number of blocks.
Note that $\epsilon = 0$ is equivalent of testing every border index.
}
\label{fig:hill}
\end{figure}
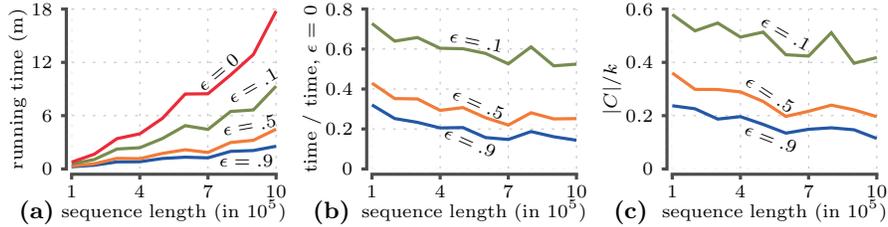

While the main reason for speed-up comes from using border indices, there are
scenarios where using \findblocks becomes significant. This happens when the
number of border indices increases.  We illustrate this effect with
\dtname{Hill} sequences, shown in Figure~\ref{fig:hill}. Here, for the sake of
illustration, we increased the threshold $\tau$ for change point detection so
that at no point we detect change. Having many entries with slowly increasing
probability of 1 yields many border points, which is seen as a fast increase in
running time for $\epsilon = 0$. Moreover, the ratio of candidates tested by 
\findblocks against the number of blocks, as well as the running time, decreases
as the sequence increases in size.

\textbf{Use case with traffic data:}
We applied our change detection algorithm on traffic data, \dtname{network2}, collected
by~\citet{amit19data}. This data contains observed connections between many hosts over
several weeks, grouped in 10 minute periods. We only used data collected during
24.12--29.12 as the surrounding time periods contain a strong hourly artifact.
We then transformed
the collected data into a binary sequence by setting 1 if the connection
was related to SSL, and 0 otherwise. The sequence contains 282\,754 entries grouped in 743 periods of 10 minutes.
Our algorithm ($\epsilon = 0$, $\tau = 6$)
found 12 change points, shown in Figure~\ref{fig:network}.
These patterns show short bursts of non-SSL connections.
One exception is the change after the index 300, where the previously high SSL activity
is resolved to a normal behavior.

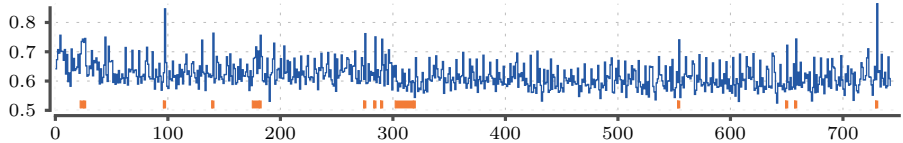
\begin{figure}
\begin{tikzpicture}
\begin{axis}[
    width = 12.8cm,
	height = 3cm,
    ymin = 0.5,
    xmax = 750,
    cycle list name=yaf,
    yticklabel style={/pgf/number format/fixed},
	tick scale binop=\times,
    ]
\addplot+[const plot, every axis plot post/.append style= {line width=0.5pt}, no markers]
	table[x expr = \coordindex, y expr = {\thisrowno{1} / (\thisrowno{1} + \thisrowno{2})}, header = false] {network_ssl.dat};

\addplot+[
		only marks,
		mark options = {
			mark = |,
			mark size = 1.5pt
		},
		error bars/.cd,
			x dir = minus,
			x explicit,
			error bar style={line width=3pt},
			error mark options = {mark size=1.5pt, rotate=90}
	]
	table[x index = 1, y expr = 0.52, x error expr = {\thisrowno{1} - \thisrowno{3} + 1}, header = false] {network_change.dat};

\pgfplotsextra{\yafdrawaxis{0}{750}{0.5}{0.85}}
\end{axis}
\end{tikzpicture}
\caption{Proportion of non-SSL connections in \dtname{Network2} traffic data over time, in 10 minute periods. The bars indicate the change points:
the end of the bar indicates when change was discovered and the beginning of the bar indicate the optimal split.}
\label{fig:network}
\end{figure}

\section{Conclusions}\label{sec:conclusions}
In this paper we presented a change point detection approach for binary streams
based on finding a split in a current window optimizing a likelihood ratio.
Finding the optimal split needs $\bigO{n}$ time, so in order for this approach to be
practical, we introduced an approximation scheme that yields $(1 - \epsilon)$
approximation in $\bigO{\epsilon^{-1} \log^2 n}$. The scheme is implemented by using border points, an idea
adopted from segmentation of log-linear models, and then further reducing
the candidates by ignoring indices that border similar blocks. 

Most of the time the number of borders will be small, and the additional
pruning is only required when the number of borders start to increase. This
suggests that a hybrid approach is sensible: we will iterate over borders if
there are only few of them, and switch to approximation technique only when
the number of borders increase.

We should point that even though the running time is poly-logarithmic, the
space requirement is at worst $\bigO{n^{2/3}}$.  This can be rectified by
simply removing older border points but such removal may lead to a suboptimal
answer.  An interesting direction for a future work is to study how to reduce
the space complexity without sacrificing the approximation guarantee.

In this paper, we focused only on binary streams. Same concept has the
potential to work also on other type of data types, such as integers or
real-values. The bottleneck here is Proposition~\ref{prop:candsize} as
it relies on the fact that the underlying stream is binary. We will leave
adopting these results to other data types as a future work.

\bibliographystyle{splncsnat}
\bibliography{bibliography}

\appendix
\section{Proof of Proposition~\ref{prop:block}}

Before proving the proposition, we need some additional notation. Let us write
\[
	\gain{i; p_1, p_2} = \lhood{a_1, b_1; p_1} + \lhood{a_2, b_2; p_2}\quad.
\]

If $p_1 = a_1 / (a_1 + b_1)$ and $p_2 = a_2 / (a_2 + b_2)$, we will drop them
from notation and write instead $\gain{i}$. Note that $\gain{i} \geq \gain{i;
r_1, r_2}$ for any $r_1$ and $r_2$.

\begin{proof}
Let $i$ be a solution for \prbchangeinc. In case of ties, let $i$ be the smallest
index producing the optimal solution.

Let $p_1 \leq p_2$ be the corresponding parameters for the two bernoulli
variables, $\gain{i} = \gain{i; p_1, p_2}$.

Assume that $p_1 = 0$. If $s_i = 0$, then it is easy to show that 
\[
	\gain{i} \leq \gain{i + 1; p_1, p_2} \leq \gain{i + 1}\quad.
\]
By repeating this argument, we can show that there is $i'$ such that $\gain{i'}
\geq \gain{i}$. Moreover, $s_{i'} = 1$ and $s_j = 0$ for any $j < i$ (it is safe to assume that $S$ has non-zero values).  This
makes $i'$ a border index. Thus we can safely assume that $p_1 > 0$, and
similarly $p_2 < 1$.

If $i$ is a border
index, then we are done. Assume that $i$ not a border index, that is, there are
two integers $x < i < y$ such that
\[
	\frac{1}{i - x} \sum_{j = x}^{j - 1} s_j \geq \frac{1}{y - i} \sum_{j = i}^{y - 1} s_j\quad.
\]
We will denote the left hand-side of the inequality $d_1$ and the right
hand-side with $d_2$.

Let us define 
\[
z_1 = \log (1 - p_1),\ 
z_2 = \log (1 - p_2),\ 
\alpha_1 = \log \frac{p_1}{1 - p_1},\ \text{and}\ 
\alpha_2 = \log \frac{p_2}{1 - p_2}\quad.
\]

We can now write $\gain{k; p_1, p_2}$ as
\[
	\gain{k; p_1, p_2} = (k - 1) z_1 + \alpha_1 \sum_{j = 1}^{k - 1} s_j + (n - k + 1) z_2 + \alpha_2 \sum_{j = k}^{n} s_j,
\]
for any index $k$.
We can now write the difference between the two scores as
\[
	\gain{y; p_1, p_2} - \gain{i; p_1, p_2} = (z_1 - z_2) (y - i) + (\alpha_1 - \alpha_2) \sum_{j = i}^{y - 1} s_j\quad.
\]
Normalizing this difference with $y - i$ leads to
\[
	\frac{1}{y - i}\pr{\gain{y; p_1, p_2} - \gain{i; p_1, p_2}} = (z_1 - z_2) + (\alpha_1 - \alpha_2) d_2\quad.
\]
Similarly,
\[
	\frac{1}{i - x}\pr{\gain{i; p_1, p_2} - \gain{x; p_1, p_2}} = (z_1 - z_2) + (\alpha_1 - \alpha_2) d_1\quad.
\]

Since $\gain{i}$ is optimal, $\gain{i} \geq \gain{y}$.
Then
\[
\begin{split}
	\frac{1}{i - x}\pr{\gain{i; p_1, p_2} - \gain{x; p_1, p_2}} & = (z_1 - z_2) + (\alpha_1 - \alpha_2)d_1 \\
	&\leq (z_1 - z_2) + (\alpha_1 - \alpha_2) d_2  \\
	& = \frac{1}{y - i}\pr{\gain{y; p_1, p_2} - \gain{i; p_1, p_2}} \\
	& \leq \frac{1}{y - i}\pr{\gain{y} - \gain{i}} \leq 0 \quad. \\
\end{split}
\]
Here we used the fact that
$\alpha_1 - \alpha_2 \leq 0$ since $p_1 \leq p_2$ and, by definition, $d_2 \leq d_1$.
In other words, $\gain{x} \geq \gain{x; p_1, p_2} \geq \gain{i; p_1, p_2} = \gain{i}$.
This violates the minimality of $i$, hence $i$ must be a border index. This completes the proof.\qed
\end{proof}

\end{document}